%
%
%
%
%
%
%
\documentclass[twocolumn]{svjour3}          
\smartqed  
\usepackage{graphicx}
\usepackage{hyperref}
\usepackage{enumitem} 
\usepackage{mathptmx}      
\usepackage{graphicx}%
\usepackage{multirow}%
\usepackage{amsmath,amssymb,amsfonts}%
\usepackage{mathrsfs}%
\usepackage[title]{appendix}%
\usepackage{xcolor}%
\usepackage{textcomp}%
\usepackage{manyfoot}%
\usepackage{booktabs}%
\usepackage{algorithm}%
\usepackage{algorithmicx}%
\usepackage{algpseudocode}%
\usepackage{listings} 
\usepackage{array}
\usepackage{booktabs}
\usepackage{tabularray}
\usepackage{float}
\usepackage{caption}
\usepackage{subcaption}
\usepackage{pifont}
\usepackage{rotating}
%
%
%
\begin{document}

\title{LIB-KD: Teaching Inductive Bias for Efficient Vision Transformer Distillation and Compression. 
}


\author{Gousia Habib$^1$ \and Tausifa Jan Saleem$^1$ \and Ishfaq Ahmad Malik$^2$ \and Brejesh Lall$^1$ 
}


\institute{$^1$ Department of Electrical Engineering \at
             Indian Institute of Technology Delhi \\
              \email{gousiya.cstaff@iitd.ac.in}           
           \and
      $^2$   Yogananda School of AI, Computers and Data Sciences  \at
          Shoolini Univesrity
}


\maketitle

\begin{abstract}
With the rapid development of computer vision, Vision Transformers (ViTs) offer the tantalising prospect of unified information processing across visual and textual domains due to the lack of inherent inductive biases in ViTs.  ViTs require enormous datasets for training. We introduce an innovative ensemble-based distillation approach that distils inductive bias from complementary lightweight teacher models to make their applications practical. Prior systems relied solely on convolution-based teaching. However, this method incorporates an ensemble of light teachers with different architectural tendencies, such as convolution and involution, to jointly instruct the student transformer. Because of these unique inductive biases, instructors can accumulate a wide range of knowledge, even from readily identifiable stored datasets, which leads to enhanced student performance. Our proposed framework \textbf{ LIB-KD} also involves precomputing and keeping logits in advance, essentially the unnormalized predictions of the model. This optimisation can accelerate the distillation process by eliminating the need for repeated forward passes during knowledge distillation, significantly reducing the computational burden and enhancing efficiency. Code for reproducibility will be available here \href{https://github.com/gousiya26-I/Distilling-Inductive-bias}{GitHub Repository}. \\
\textbf{Impact Statement:} Initially designed for natural language processing, transformers are a promising alternative to convolutional neural networks for visual learning. Nevertheless, their effectiveness falls when confronted with limited training data due to a lack of inherent inductive bias. This paper aims to bridge this gap and enhance their practical utility by developing an innovative ensemble-based distillation approach.

A single-channel distillation token facilitates a lightweight teacher ensemble with diverse inductive biases. In addition to imparting valuable inductive biases, this ensemble provides an efficient way of deploying these models on edge devices with limited computing power.
\keywords{Visual Transformers (VTs) \and Vision Transformers (ViTs) \and CNNs \and Involution \and INNs \and Knowledge Distillation \and KLD Loss}
\end{abstract}
\section{Introduction}\label{sec1}
Visual Transformers (VTs) are becoming more popular in computer vision as an alternative to traditional convolutional neural networks (CNNs). A wide range of tasks can be performed using them, such as image classification \cite{1}, object detection \cite{3}, segmentation \cite{4}, image generation \cite{5}, and 3D data processing \cite{6}, among others. Having evolved from the Transformer model, the gold standard in Natural Language Processing (NLP), these architectures draw inspiration from the renowned model. ViTs offer the potential to create unified information-processing frameworks that span visual and textual fields. A groundbreaking contribution in this direction is the Vision Transformer (ViT). ViT divides an image into non-overlapping patches and then linearly transforms each patch into an input embedding, effectively creating a ``token'' of that image. Similarly to how tokens are processed in NLP transformers, all these tokens undergo a series of Multi-Head Self Attention (MHSA) given by equation \ref{eq1} and feed-forward layers. The Mathematical representation of MHSA is given as:
\begin{equation}
\text{Attention}(Q,K,V)=\text{Softmax}\left(QK^{T}/\sqrt{d}\right)V \label{eq1}
\end{equation}
Where $Q, K \text{ and } V$ represent Queries, Keys and Values, and $d$ represents the model depth of ViT. ViTs can leverage attention layers to model global relationships among tokens, differentiating them from CNNs. \\
Contrary to CNNs, where convolutional kernels' receptive fields limit how relationships can be learned, VTs provide a more expansive representation capability. Although VTs represent more information, they lack CNNs' inherent inductive biases, which decreases their representation power. These biases are derived from exploiting local information, translation invariance, and hierarchical data structures.\\
To achieve this trade-off, VTs typically require a substantial amount of training data, exceeding the data requirements of conventional CNNs. In contrast to ResNets, which possess similar model capacities, ViT's performance is noticeably inferior when trained on ImageNet-1K, a dataset that comprises approximately 1.3 million samples. ViTs rely on a larger dataset because they need to learn specific local characteristics of visual signals, something CNNs build into their architecture by design. The reason why ViTs require a large-scale dataset to understand inductive biases is illustrated in the CKA Similarity metric, given as:
\begin{equation}
  \text{CKA}(P,Q)=\frac{\text{HSIC}(P,Q)}{\sqrt{\text{HSIC}(P,P)\times \text{HSIC}(Q,Q)}}                           \label{eq2}
\end{equation}  
where, $P \in \mathbf{R}^{m \times p_1} \times\mathbf{R}^{p_1 \times m} $ and $Q \in \mathbf{R}^{m \times p_2} \times\mathbf{R}^{p_2 \times m}$ denote the Gram matrices for the two layers with $p_1$ and $p_2$ Neurons (which measure the similarity of a pair of data points according to layer representations).\\
The calculated representations using the CKA Similarity metric are depicted in Figure \ref{fig:1}. There is a marked difference between ViTs and CNNs in their representation structure, with ViTs having highly similar representations throughout the model. In contrast, ResNet models show much less similarity between lower and higher layers \cite{7}.
\begin{figure}[H]
\centering
         \includegraphics[width=\linewidth]{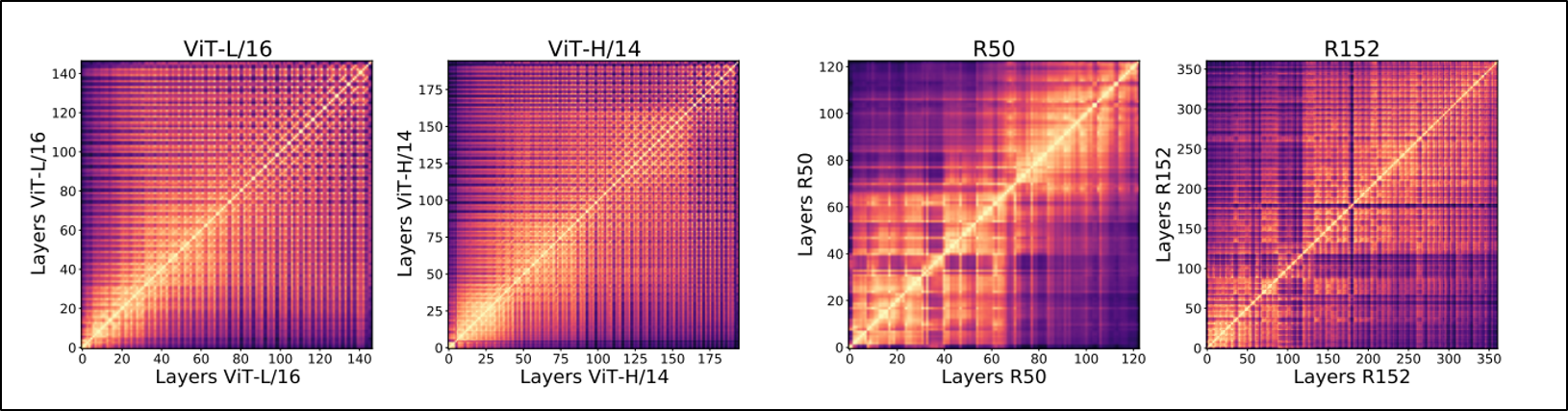}
        \caption{Representational Structure ViTs and CNNs using CKA Similarity \cite{7}.}
        \label{fig:1}
 \end{figure}
From the above heatmaps depicted in figure \ref{fig:1}, it is clear that ViTs and CNNs, such as ResNet models, provide significant differences in representation. ViTs exhibit remarkable consistency in their representations across model layers. As we progress from lower to higher layers, the features extracted in ViTs remain similar. Heatmaps illustrate this uniformity when comparing the similarities between layers in different ViT models. Heatmaps show a grid-like pattern, with high similarity scores between adjacent and distant layers. Alternatively, ResNet models display their representation structure in another way. We observe distinct stages in the structure of ResNets when examining the similarity between layers. Compared to higher layers, lower layers are relatively less similar. The features extracted in the early layers of a ResNet model differ significantly from those removed in the later layers. The heatmap reflects this stage-wise dissimilarity, where we can see more miniature similarity scores between layers at different stages. Overall, ViT models consistently maintain their features from layer to layer, whereas ResNet models exhibit more pronounced variations as layers are raised. \\
It is essential to consider local receptive fields represented by Figure \ref{fig:3} when comparing CNNs and ViTs \cite{7}. In CNNs, the local receptive field defines how neurons or units in a particular layer are connected to a specific region or patch of the input image. Using regional connectivity, CNNs can capture spatial hierarchies and patterns. In CNNs, lower layers learn simple features like edges and textures, and higher layers learn progressively more complex patterns. Hierarchical approaches benefit from local receptive fields, which ensure locality, translation invariance, and small details in data. ViTs, on the other hand, take a different approach. Using non-overlapping patches, images are divided into small patches and transformed into tokens.\\
Despite having mechanisms like self-attention to capture global relationships between tokens, ViTs may need help capturing fine-grained local details as effectively as CNNs. This is because the local receptive field concept intrinsic to CNNs is not explicitly enforced in ViTs. ViTs learn local properties through their self-attention mechanisms, which require more data to achieve the same efficiency level as CNNs. When comparing the two approaches, it is essential to consider how CNNs and ViTs handle local information and the trade-offs between enforcing locality (CNNs) and relying on self-attention mechanisms (ViTs). When choosing the appropriate architecture for a specific application, these differences should be considered when assessing the performance of various computer vision tasks.
\begin{figure}[h]
    \centering
    \includegraphics[width=0.7\linewidth]{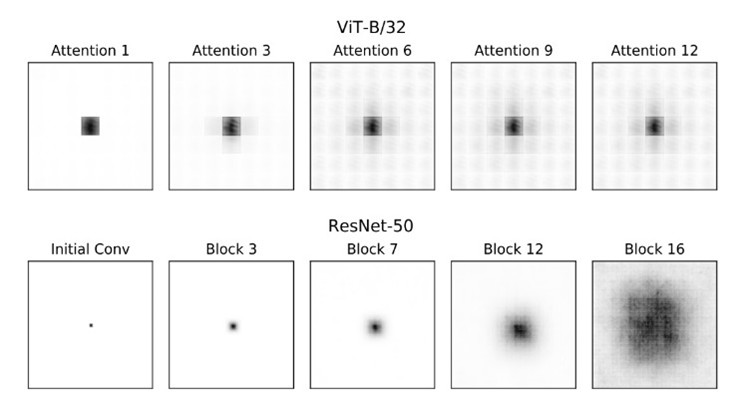}
    \caption{Effective receptive fields of ResNet are highly localised and grow gradually; ViT's are globalised \cite{7}.}
    \label{fig:3}
\end{figure}
Because local feature extraction is not explicitly enforced by an implicit bias and the prevalence of global receptive fields, ViTs need help being as efficient as CNNs. To unravel the full potential of Vision Transformers (ViTs) in computer vision \cite{8}, we must overcome the challenges of inductive bias and global receptive fields.\\
Innovative solutions must be explored and developed to increase the efficiency and adaptability of ViTs to eventually achieve performance levels comparable to or exceeding those achieved by CNNs across a broad range of visual recognition tasks. Dealing with inductive biases and global receptive fields in ViTs is a powerful motivation, propelling me on a scientific journey to devise and present groundbreaking solutions. This drive inspires me to take this challenge as an opportunity to create innovative and scientifically rigorous solutions that precisely address these intricacies.\\
We propose a solution based on co-advice, comprising homogeneous ensembles of either convolutional (Convo) or involutional (Invo) networks, as well as heterogeneous ensembles combining both Convo and Invo architectures \cite{9}. This approach not only addresses the challenge of inductive bias but also overcomes the computational and resource constraints associated with deploying Vision Transformers (ViTs) on edge devices. Our framework offers a comprehensive strategy that enhances model performance and adaptability while optimizing deployment efficiency on resource-limited edge computing platforms. The key contributions of our work are as follows.
The main contributions of our work are as follows.
\begin{enumerate}[label=\roman*.] 
    \item We introduce a novel ensemble approach called \emph{ensemble-based cross-inductive bias distillation} \textbf{ LIB-KD} to transfer valuable knowledge into lightweight student models using complementary teacher models such as INNs and CNNs.
    \item This distillation technique enhances the performance and capabilities of vision transformer models by leveraging the unique strengths of each teacher model.
    \item Ensemble guidance is provided through a single distillation token, with the DeiT model serving as the foundational architecture.
    \item Instead of presenting INN and CNN models as separate tokens, we propose creating an ensemble that includes both, enabling the exploitation of their complementary inductive biases.
    \item To reduce computational complexity, we guide the ViT using a single distillation token, optimizing the knowledge transfer process while maintaining efficiency.
\end{enumerate}
To the best of our knowledge, this is the first work to apply transformers on downstream datasets by leveraging an ensemble of lightweight teacher models to effectively impart inductive bias. Beyond transferring inductive bias, our approach delivers a lightweight, resource-efficient solution that enables real-time deployment on small datasets while achieving superior generalization performance.\\
To substantiate these claims, we begin in Section 2 with a discussion of related work that situates our approach in the broader context. Section 3 introduces our proposed ensemble framework, explaining the integration of the Multi-Head Self-Attention Layer, Convolutional Filter, and Involution Filter. The underlying theoretical justification is provided in Section 4, followed by detailed experimental validation in Section 5. Section 6 benchmarks our approach against state-of-the-art (SOTA) methods, while Section 7 presents an ablation analysis to isolate each component’s impact. Finally, Section 8 concludes with key findings and potential research directions.
\section{Related Work} \label{Sec2}
\textbf{CNN:} The convolution operator was invented approximately three decades ago in \cite{11}. Since the advent of deep CNNs like AlexNet, VGGNet, ResNet, and EfficientNet \cite{13,15,14}, it has resurged and made a noticeable impact. As a result of these deep CNNs, we are witnessing a breakthrough in nearly any task imaginable. CNNs perform exceptionally well because of their inherent characteristics, called inductive biases, especially translation equivariance \cite{15} and spatial-agnostic properties \cite{16} associated with the convolution operator. It is only possible to capture spatially distant relationships in CNNs if deliberate efforts are made to increase the kernel size and model depth.\\
\noindent
\textbf{Transformers:} Recent attention has been paid to transformers in computer vision, which originated in natural language processing \cite{17}. As reported in \cite{2}, the ViT feeds $16\times16$ image patches into a standard transformer, achieving comparable results as CNNs on JFT-300M \cite{2}. However, its superiority comes at the expense of an enormous amount of labelled data and a lengthy training period. Moreover, ViTs do not achieve significant accuracy improvements when insufficient data is provided. Furthermore, DETR and ViT were proposed in \cite{18}. When ViT \cite{19} represents images as semantic tokens and exploits transformers in image classification and semantic segmentation, DETR \cite{18} uses bipartite matching loss and a transformer-based encoder-decoder structure. Besides the application, as mentioned earlier, it has been theoretically demonstrated that transformers use self-attention mechanisms as expressive as convolution layers \cite{20}.\\
\noindent
\textbf{INNs:} Unlike the convolution operator, the Involution operator was introduced relatively recently in \cite{23}. Contrary to a convolution operator, an involution kernel shares its spatial extent across channels but is spatially agnostic. When compared to convolution, involution exhibits precisely the opposite inherent characteristics. Consequently, involution can capture spatial relationships over a long distance. RedNet architectures, which use involution to achieve enhanced performance, are consistently superior to CNNs and transformers, as shown in \cite{24}.\\ 
\noindent
\textbf{Knowledge Distillation (KD):} KD is a model compression technique that uses a high-capacity teacher model to train lightweight student models \cite{27}. According to the original formulation by \cite{2}, this objective is achieved by minimising the Kullback-Leibler (KL) divergence between student and teacher probabilistic predictions. Since then, KD has been applied to many learning tasks, such as privileged learning \cite{29,27}, cross-modal learning \cite{26}, adversarial learning \cite{31}, contrastive learning \cite{24}, and incremental learning \cite{32}. The token-based KD strategy proposed by \cite{2} fits with the context of our research. As a result of distilling knowledge from a powerful ensemble of CNN and INN-based teachers, DeiT \cite{2} performed equivalently to CNNs, whereas the earlier ViT \cite{33} did not consider tiny datasets \cite{35,36,34}.\\
\noindent
\textbf{ViTs for small datasets:} Researchers presented an effective strategy for training Vision Transformers (ViTs) without the need for large-scale pretraining datasets in this study \cite{37}. The authors in \cite{37} employed a self-supervised inductive bias learning approach directly from these modest datasets. Self-supervised learning initialises the network, followed by supervised training on the same dataset to fine-tune it \cite{40,38}.\\
Transformers are becoming increasingly valuable in various fields because of the success of ViTs. Due to their inability to capture local information, ViTs are limited when trained directly on small datasets. A hybrid model combining ViTs and CNNs is proposed in this \cite{41} work to address this issue. As part of the transformer architecture, this model incorporates convolutional operations that enhance classification performance on small datasets, specifically a novel convolutional parameter sharing multi-head attention (CPSA) block and a local feed-forward network (LFFN) block. The authors in \cite{41} showed state-of-the-art results on small datasets, demonstrating a promising avenue for leveraging transformers.\\
\noindent
\textbf{Lightweight ViTs for small datasets:} Lightweight CNNs have proved invaluable in various mobile vision tasks. A recent effort has been made to create lightweight, efficient ViTs. MobileViT \cite{42} outperformed MobileNets \cite{43} and ShuffleNet \cite{44} by combining standard convolutions and transformers. Based on neural architecture search (NAS) \cite{45}, the researchers identified a range of efficient ViTs with varying computational requirements, outperforming existing benchmarks. The model throughput efficiency of ViTs was enhanced by \cite{50,46} by optimising the speed of inference for small to medium-sized ViTs.\\
In contrast, our methodology \textbf{LIB-KD} emphasises imbuing inherent inductive biases from a diverse ensemble of lightweight teachers into ViTs. The primary objective is to enhance the efficiency of ViTs, making them competitive with CNNs while utilising fewer parameters and mitigating computational complexities and resource requirements. Simultaneously, we aim to optimise these ViTs for deployment in resource-constrained edge computing environments \cite{51}.
\section{Proposed Ensemble  Approach for Imparting Cross-Inductive Bias to ViTs via KD} \label{sec3}
According to our hypothesis, our teachers acquire distinct knowledge despite being trained on the same dataset due to inherent inductive biases, spatial-agnostic and channel-specific in convolution and spatial-specific and channel-agnostic in involution. Therefore, teachers with different inductive biases offer different perspectives and make different assumptions about data. However, ResNet-26 and ResNet-38 \cite{13}, which have similar inductive biases but various performances, describe data relatively similarly. Based on the complementary inductive biases of these different types of teachers, our method only requires two highly efficient teachers (a CNN and an INN), both of which can be easily trained. During Distillation, these teachers' knowledge complements one another, resulting in increased accuracy in the student transformer.\\
When pretraining small models directly on extensive data, they produce few benefits, especially when transferring them to downstream tasks. To solve this problem, we implement knowledge distillation to maximise the benefits of pretraining for small models. We emphasise Distillation before training instead of prior approaches that emphasise Distillation during the fine-tuning stage. In addition to allowing small models to learn from larger-scale models, this technique also improves downstream performance.
\begin{figure*}
    \centering
    \includegraphics[width=0.9\linewidth]{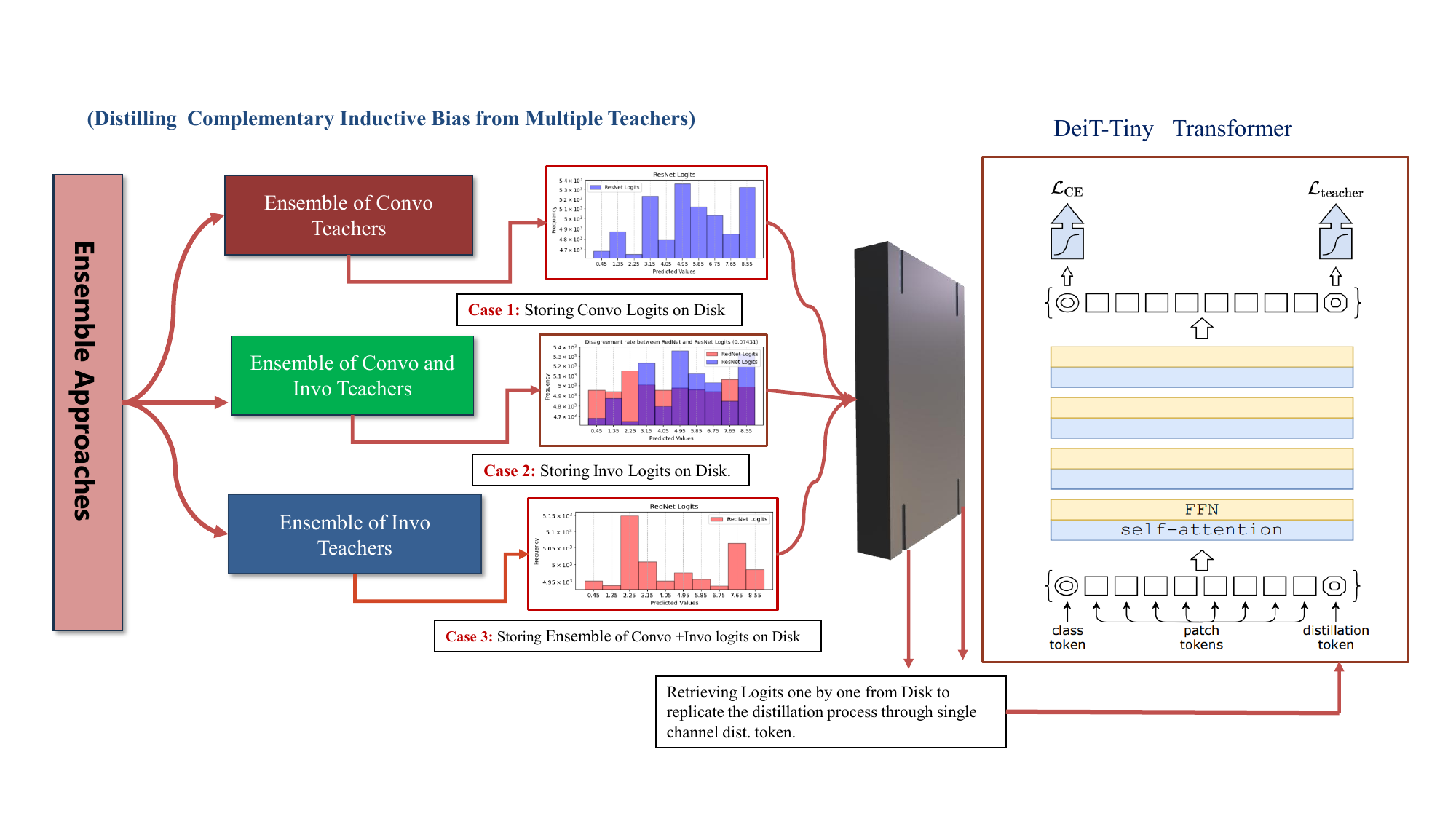}
    \caption{Novel pretraining distillation framework utilising a single channel distillation token. Storing and retrieving logits from an ensemble of Invo, Convo and a mixture of both Convo and Invo teachers for distillation.}
    \label{fig:4}
\end{figure*}
In contrast, conventional pretraining using Distillation is wasteful and resource-intensive. In every iteration, most computing resources are spent on passing training data through the large teacher model rather than training the minor target student. Additionally, a prominent teacher model consumes substantial GPU memory, slowing down the training of the target student due to batch sizes. To address these challenges, we propose a unique and fast distillation framework \textbf{ LIB-KD} (Fig. \ref{fig:4}). By storing teacher predictions in advance, we can replicate the distillation process during training without performing extensive forward computations or allocating memory to the large teacher model. The various crucial components of our proposed methodology are discussed as follows:
\subsection{Multi-Head Self-Attention Layer} Considering $M$ input feature vectors $\{ \textbf{x}_m\in \mathbf{R}^{d_i}\mid m=1,2, \cdots, M  \}$, an array of rows is stacked in matrix form $Y\in \mathbf{R}^{M \times d_i} $. As part of a single-head self-attention layer, a query, key, and value matrix is calculated according to the following points:
\begin{align*}
 Q&=YM^{q} \in \mathbf{R}^{M \times dep_q}, 
 K=YM^k\in \mathbf{R}^{M \times dep_k} \\
 V&=YM^{v}\in \mathbf{R}^{M \times dep_v}   \label{eq3}
\end{align*}
where $M^{q} \in   \mathbf{R}^{d_i \times d_k}, M^k \in \mathbf{R}^{d_i \times d_k}$ and $M^v\in \mathbf{R}^{d_i\times d_v}$ represent different learnable parameters of the model. The outcome of the attention model is given as: 
\begin{equation}
\text{Self-Attention}(Q,K,V)=\text{Softmax}\left( \frac{QK^T}{\sqrt{dep_k}} \right)V    
\end{equation}
Each row of this matrix is fitted with the Softmax function. An ensemble of independent self-attention layers is the foundation for a multi-head self-attention layer.
\subsection{Convolutional Filter} A group of fixed-sized convolution filters, each with a size of $K \times K$ represented by $F_k \in \mathbf{R}_{c \times \text{x} \times K}$, $k=1,2, \cdots, C_o$ containing $C_i$ convolution kernels $F_{kc}=\mathbf{R}_{k \times k}$, $c=1,2, \cdots, C_i$. Their kernels perform the scalar addition-multiply operation on the incoming feature map in a sliding window fashion to produce the output feature vector $Y\in\mathbf{R}_{H \times W\times c} $ depicted as 
    \begin{equation}
    Y_{i,j,k}=\sum_{c=1}^{C_i}\sum_{(u,v)\in \Delta_k}\mathcal{F}_{k,c,u+\lfloor K/2 \rfloor, v+\lfloor K/2 \rfloor}X_{i+u, j+v,c} \label{eq5}
\end{equation}
Where $u$ and $v$ represent the spatial offsets in the $K\times K$ neighbourhood, and $\Delta_K \in\mathbf{Z}^2 $ describes the balances around the centre pixel, considering the convolution held on it.
    \begin{equation}
 \Delta_K = \begin{bmatrix} -\lfloor K/2\rfloor, \cdots , \lfloor K/2\rfloor \end{bmatrix} \times \begin{bmatrix} -\lfloor K/2\rfloor, \cdots , \lfloor K/2\rfloor \end{bmatrix} \label{eq6}
\end{equation}
Furthermore, depthwise convolution \cite{42} propels the formula to the group convolution \cite{43} to the end, where each convolution filter is applied on a single feature channel. So, equation \ref{eq5}, $\mathcal{F}_k$ is replaced by $\mathcal{G}_k$ and the formula is rewritten as 
    \begin{equation}
    Y_{i,j,k}= \sum_{(u,v)\in \Delta_k}\mathcal{G}_{k,c,u+\lfloor K/2 \rfloor, v+\lfloor K/2 \rfloor}X_{i+u, j+v,c} \label{eq7}
\end{equation}
Where $\mathcal{G}_k$ represents channel-wise $k$th feature slice about $x$th feature input.
\subsection{Involution Filter}
In comparison to the above-discussed standard or group convolution. Involution kernels \cite{20} $H\in \mathbf{R}^{H\times W\times K\times K\times G}$. They are devised to invert the inherent characteristics of the standard convolution (spatial agnostic and channel-specific) into (spatial typical and channel agnostic) behaviour. The output feature vector produced by applying such involution kernels on the input feature map yields output as:
    \begin{equation}
    Y_{i,j,k}= \sum_{(u,v)\in \Delta_k}H_{i,j,u+\lfloor K/2 \rfloor, v+\lfloor K/2 \rfloor, \lceil kG/C \rceil}X_{i+u, j+v,k} \label{eq8}
\end{equation}
Besides convolution kernels, which use a constant-size seed. Involution kernel H utilises a variable-size kernel based on the input feature map. $(i,j)$. Involution kernels could be generated based on (part of) the original input tensor so that the output kernels align comfortably with the input. Kernel generation is symbolised, and functional mapping is abstracted as: 
\begin{equation}
    \mathcal{H}_{i,j}=\phi\left(H_{\Psi_{i,j}} \right) \label{eq9}
\end{equation}
Where $\Psi_{i,j}$ represents a group of pixels. The overall learning objective of our proposed framework is a weighted sum of two losses: base loss and cross-entropy (CE). Base loss is minimised between ground truth labels with the student, and CE between dist token and hard label teacher predictions (Ensemble of CNN and INN). 
    \begin{align*}
   & Z_t = \text{argmax}[\mathcal{L}_{CE}(\sigma(z_{t_1}),Y)+\mathcal{L}_{CE}(\sigma(z_{t_1}),Y/2)] \\
    \text{Overall Loss}&=\mathcal{L}_{CE}(\sigma(z_{s}),Y)+CE\left[ \left(\frac{z_s}{\tau_1} \right), \left(\frac{z_t}{\tau_1} \right) \right]
\end{align*}
where $z_t$ represents the ensemble of hard-predicted labels of two complementary teachers, CNN and INN.
\section{Theoretical Proof of Proposed Approach}
\begin{proposition}
Let \(T_{\text{CNN}}\) and \(T_{\text{INN}}\) be complementary teachers and \(S_{\text{ViT}}\) a student ViT. Training \(S_{\text{ViT}}\) via KD from their ensemble (weighted KL on logits) yields an inductive bias combining local features from \(T_{\text{CNN}}\) and global transformations from \(T_{\text{INN}}\), surpassing either teacher alone.
\end{proposition}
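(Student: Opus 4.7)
The plan is to treat the proposition at three levels: (i) give an operational definition of the inductive bias each teacher carries; (ii) show via the Kullback--Leibler term in the loss of Section 3 that the trained student inherits a convex combination of the two teacher distributions; and (iii) convert this distributional inheritance into an excess-risk gap that justifies the word \emph{surpassing}. Throughout, I would reuse the convolution/involution formulas \eqref{eq5}, \eqref{eq7}, \eqref{eq8} so that the spatial-agnostic/channel-specific versus spatial-specific/channel-agnostic dichotomy becomes the formal statement that $T_{\text{CNN}}$ is approximately translation-equivariant while $T_{\text{INN}}$ is approximately channel-permutation-equivariant.

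First, I would fix notation. Let $p_s(\cdot\mid x;\theta)$, $p_c(\cdot\mid x)$ and $p_i(\cdot\mid x)$ be the softmax outputs of $S_{\text{ViT}}$, $T_{\text{CNN}}$ and $T_{\text{INN}}$, and let $p_e = \alpha p_c + (1-\alpha) p_i$ be the weighted ensemble target used in the KD term of Section 3. The overall loss of Section 3, after dropping $\theta$-independent constants, is equivalent to
\begin{equation*}
\mathcal{J}(\theta) = \mathbb{E}_x\bigl[\operatorname{KL}(p_e \,\Vert\, p_s(\cdot\mid x;\theta))\bigr] + \lambda\,\mathbb{E}_{(x,y)}\bigl[\operatorname{CE}(y,p_s)\bigr].
\end{equation*}
Strict convexity of $\operatorname{KL}(p_e\Vert\cdot)$ on the probability simplex, together with the universal-approximation capacity of a sufficiently wide ViT, implies that the unconstrained minimiser satisfies $p_s^\star = p_e$ up to the correction from the ground-truth term, which is uniformly small when the teachers are consistent with $y$.

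Second, I would convert this distributional equality into a symmetry statement. Using the convolution formula \eqref{eq5}, one shows that $p_c$ satisfies $\lVert p_c(\cdot\mid T_h x) - T_h p_c(\cdot\mid x)\rVert \le \varepsilon_c$ for small spatial shifts $T_h$; symmetrically, \eqref{eq8} and \eqref{eq9} yield $\lVert p_i(\cdot\mid \Pi x) - \Pi p_i(\cdot\mid x)\rVert \le \varepsilon_i$ for channel-mixing transformations $\Pi$ that $T_{\text{INN}}$ is designed to respect. Since $p_s^\star$ equals an affine combination of $p_c$ and $p_i$, the triangle inequality gives approximate equivariance of $p_s^\star$ under the subgroup generated by both $\{T_h\}$ and $\{\Pi\}$. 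This is strictly larger than the symmetry subgroup of either teacher alone, which is the formal content of \emph{combining local features and global transformations}.

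Third, for the \emph{surpassing} claim I would invoke a standard ensemble risk inequality: for any convex task loss $\ell$, Jensen gives $\mathbb{E}[\ell(p_e,y)] \le \alpha\mathbb{E}[\ell(p_c,y)] + (1-\alpha)\mathbb{E}[\ell(p_i,y)]$, strict whenever the teachers' residuals are not perfectly correlated, which is precisely the complementarity hypothesis built into the design and already visible in the CKA gap of Figure \ref{fig:1}. Composing with $p_s^\star \approx p_e$ via Pinsker's inequality transfers this improvement to $S_{\text{ViT}}$. The main obstacle, I expect, is making the second step rigorous: the equivariances of trained CNNs and INNs are only approximate, and one must propagate the $\varepsilon_c,\varepsilon_i$ slack through the KL minimisation without blowing it up. I would control this by working with a relaxed $(G,\varepsilon)$-equivariance notion and using a Pinsker-type bound $\lVert p_s^\star - p_e\rVert_1 \le \sqrt{2\,\operatorname{KL}(p_e\Vert p_s^\star)}$ to keep the bias-transfer gap of the same order as the teachers' own equivariance error, which is the cleanest formulation I can see that still supports the stated conclusion.
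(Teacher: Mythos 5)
Your route is genuinely different from the paper's, and in places more careful. The paper's proof stays at the distributional level: it writes the objective as $\lambda_1\,\mathrm{KL}(p_{\text{CNN}}\|p)+\lambda_2\,\mathrm{KL}(p_{\text{INN}}\|p)$, asserts that the minimiser is the weighted \emph{geometric} mean $p\propto p_{\text{CNN}}^{\lambda_1}p_{\text{INN}}^{\lambda_2}$, identifies the ensemble bias with the intersection $\mathcal{B}_{\text{CNN}}\cap\mathcal{B}_{\text{INN}}$ of hypothesis spaces, and stops there; it never formalises what ``inductive bias'' means and offers no quantitative argument for the ``surpassing'' clause. You instead (i) collapse the two KL terms into a single $\mathrm{KL}(p_e\|p_s)$ against the arithmetic mixture, (ii) translate each teacher's bias into an approximate equivariance statement derived from the convolution and involution formulas, and (iii) close with Jensen plus Pinsker. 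Point (i) is in fact a correction of the paper: for the forward direction $\mathrm{KL}(p_i\|q)$ that both of you minimise over $q$, the unconstrained optimum is the weighted \emph{arithmetic} mean, not the geometric mean the paper claims (the geometric mean solves the reverse problem $\min_q\sum_i\lambda_i\,\mathrm{KL}(q\|p_i)$). Points (ii) and (iii) add content the paper does not attempt and are exactly what would be needed to turn the proposition into a theorem rather than a narrative.

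The one step that does not survive scrutiny as written is the triangle-inequality argument in your second stage. If $p_c$ is $\varepsilon_c$-equivariant under spatial shifts $T_h$ and $p_i$ is $\varepsilon_i$-equivariant under channel mixings $\Pi$, then for the mixture $p_e=\alpha p_c+(1-\alpha)p_i$ you only get
\[
\bigl\|p_e(\cdot\mid T_hx)-T_hp_e(\cdot\mid x)\bigr\|\le \alpha\varepsilon_c+(1-\alpha)\bigl\|p_i(\cdot\mid T_hx)-T_hp_i(\cdot\mid x)\bigr\|,
\]
and the second term is the \emph{involution} teacher's translation-equivariance error, which is not assumed small; the symmetric statement holds for $\Pi$. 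So the affine combination is approximately equivariant under the generated subgroup only if each teacher is already approximately equivariant under the other teacher's transformations, and in that case the claim that the student acquires a strictly larger symmetry group than either teacher loses its force. You would need either an explicit hypothesis of that kind (defensible at the level of output class probabilities, where any accurate classifier is nearly invariant to label-preserving transformations) or a formalisation of bias transfer at the level of intermediate representations rather than softmax outputs. The paper sidesteps this entirely by never making the equivariance claim precise, so your proposal is the more ambitious of the two but also exposes the gap that the published argument hides.
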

\begin{proof}
Define the output logits of the teacher and student models for an input \( x \) as:
\[
F_{\text{CNN}}(x), \quad F_{\text{INN}}(x), \quad F_{\text{ViT}}(x) \in \mathbf{R}^C,
\]
where \( C \) is the number of classes. The corresponding softmax probability vectors are:
\begin{align*}
    p_{\text{CNN}}(x) &= \text{softmax}(F_{\text{CNN}}(x)),
     p_{\text{INN}}(x) = \text{softmax}(F_{\text{INN}}(x)),\\ 
     p_{\text{ViT}}(x) &= \text{softmax}(F_{\text{ViT}}(x))
\end{align*}
\textbf{Step 1: Formulation of Ensemble Knowledge Distillation Loss:} The ensemble KD loss minimized during training is:
    \[
L_{\text{KD}} = \lambda_1 \, \text{KL}\big(p_{\text{CNN}}(x) \| p_{\text{ViT}}(x)\big) + \lambda_2 \, \text{KL}\big(p_{\text{INN}}(x) \| p_{\text{ViT}}(x)\big)
\]
where \(\lambda_1, \lambda_2 > 0\) control the contribution of each teacher.\\
\textbf{Step 2: Interpretation of Individual Loss Terms}
\begin{itemize}
    \item \textit{CNN teacher loss term}:
    \[
    L_{\text{CNN}} = \text{KL}\big(p_{\text{CNN}}(x) \| p_{\text{ViT}}(x)\big)
    \]
    enforces the student to mimic the CNN’s local spatial feature extraction, encoding translation invariance and locality.
    \item \textit{INN teacher loss term}:
    \[
    L_{\text{INN}} = \text{KL}\big(p_{\text{INN}}(x) \| p_{\text{ViT}}(x)\big)
    \]
    enforces the student to preserve the global bijective transformations and long-range dependencies learned by the INN.
\end{itemize}
\textbf{Step 3: Why Ensemble Distillation Leads to Stronger Inductive Bias}\\
\textbf{Claim:} The inductive bias \( \mathcal{B}_{\text{ensemble}} \) learned by distilling from the ensemble is \textit{strictly stronger} than biases from distilling from either teacher alone:
\[
\mathcal{B}_{\text{ensemble}} \supset \mathcal{B}_{\text{CNN}}, \quad \mathcal{B}_{\text{ensemble}} \supset \mathcal{B}_{\text{INN}}.
\]
\textit{3.1. Ensemble KD Loss as Multi-Objective Optimization}\\
Minimizing
\[
L_{\text{KD}} = \lambda_1 L_{\text{CNN}} + \lambda_2 L_{\text{INN}}
\]
means finding \( p_{\text{ViT}}(x) \) that approximates both \( p_{\text{CNN}}(x) \) and \( p_{\text{INN}}(x) \), i.e., solving
\[
p_{\text{ViT}}^* = \arg\min_p \left[\lambda_1 \text{KL}(p_{\text{CNN}} \| p) + \lambda_2 \text{KL}(p_{\text{INN}} \| p) \right].
\]
\textit{3.2. Geometric View: Intersection of Hypothesis Spaces}\\
The hypothesis sets corresponding to each teacher’s bias satisfy:
\[
\mathcal{B}_{\text{ensemble}} = \mathcal{B}_{\text{CNN}} \cap \mathcal{B}_{\text{INN}},
\]
meaning the student must lie in the intersection, encoding both local and global inductive biases simultaneously.\\
\textit{3.3. Convexity and Weighted Geometric Mean}\\
The solution to the above optimization is the weighted geometric mean:
\[
p_{\text{ensemble}}(x) \propto p_{\text{CNN}}(x)^{\lambda_1} \cdot p_{\text{INN}}(x)^{\lambda_2},
\]
which differs from each individual teacher distribution, representing a hybrid, more expressive distribution.\\
\textbf{Step 4: Consequences on Inductive Bias and Generalization}
\begin{itemize}
    \item The student inherits \textit{local pattern recognition} bias from the CNN.
    \item It simultaneously inherits \textit{global transformation} and \textit{information preservation} bias from the INN.
    \item This hybrid inductive bias leads to improved generalization and richer representations.
\end{itemize}
\end{proof}
    \section*{Summary}
\fbox{%
  \parbox{\dimexpr\linewidth-2\fboxsep-2\fboxrule\relax}{%
    \[
    \begin{aligned}
     \text{En}&\text{semble KD Loss} \\
    &= \underset{p}{\min} \ \lambda_1 \, \text{KL}(p_{\text{CNN}} \| p) + \lambda_2 \, \text{KL}(p_{\text{INN}} \| p) \\
       p_{\text{ViT}} &\approx p_{\text{ensemble}} \\
    & := \arg\min_p \left[ \lambda_1 \, \text{KL}(p_{\text{CNN}} \| p) + \lambda_2 \, \text{KL}(p_{\text{INN}} \| p) \right], \\
    & \text{Student inherits a strictly richer inductive bias } \\
    &\mathcal{B}_{\text{ensemble}} = \mathcal{B}_{\text{CNN}} \cap \mathcal{B}_{\text{INN}}, \\
    & \text{Improved generalization and stronger representational} \\ & \text{power compared to individual KD}.
    \end{aligned}
    \]
  }%
}
By training the ViT with this integrated distillation loss, the student learns both local features from CNNs (spatial locality, translation invariance) and global transformations from INNs (information preservation), enhancing the ViT's inductive bias and improving generalization across vision tasks. This combined inductive bias enhances generalization on vision tasks.
\section{Experimental Results}
This section aims to understand our approach through a series of analytical experiments comprehensively. We begin by explaining the complexities of our distillation strategy, outlining the key steps and methodologies—our distillation strategy functions as a bridge between complex neural networks and simplified representations in our research. An extensive process of transferring knowledge (learned inductive bias) from a larger, more complicated model to a smaller, more efficient one is involved. Afterwards, we will examine three fundamental architectural paradigms in computer vision: CNNs, INNs, and ViTs, in a comparative analysis. During our exploration, we seek to gain a deeper understanding of the strengths and weaknesses of these approaches. 

The next crucial step is to discuss the configuration of the proposed model used during the training regime. As a result of this step, we can provide a clear understanding of the experimental setup and ensure the reproducibility of the results. Here, we will provide an overview of the meaning of hyperparameters in the context of deep learning experiments. As they govern various aspects of the training process, hyperparameters are crucial in shaping neural networks' behaviour and performance. These parameters include learning rates, batch sizes, weight initialisations, regularisation techniques, and optimisation algorithms. 

Besides hyperparameters, providing insight into the dataset used for conducting experiments is essential. This study used the CIFAR-100 dataset \cite{53}, a well-known computer vision and deep learning benchmark. CIFAR-100 is an excellent testbed for assessing the performance of various machine learning and deep learning models. There are 60,000 images in the CIFAR-100 dataset, divided into ten distinct classes, each with 6,000 images of size $32\times 32$. Furthermore, the dataset is divided into two subsets: the training set has 50,000 images, while the test set has 10,000 images. The hyper-parameters for training two complementary teacher models, CNN and INN, are given in Table \ref{tab1}.
\begin{tiny}
\begin{table*}
\caption{Hyperparameter details of the Proposed Method}\label{tab1}
\centering 
\begin{tabular}{|m{0.050\linewidth}|m{0.070\linewidth}|m{0.024\linewidth}|m{0.024\linewidth}|m{0.024\linewidth}|m{0.024\linewidth}|m{0.024\linewidth}|m{0.024\linewidth}|m{0.024\linewidth}|m{0.028\linewidth}|m{0.024\linewidth}|m{0.068\linewidth}|m{0.068\linewidth}|m{0.25\linewidth}|} 
\hline 
Model & \multicolumn{12}{m{0.382\linewidth}|}{Parameters of the Models} & Other Hyper
  parameters \\ 
\hline 
\multirow{2}{*}{INN} & \begin{sideways}\# stages\end{sideways} & \begin{sideways} Reduction Ratio\end{sideways}  & \begin{sideways}In channels \end{sideways} &\begin{sideways} Base Channels \end{sideways} &\begin{sideways} Stem channels \end{sideways} & \begin{sideways} Group Channels \end{sideways} & \begin{sideways} depth \end{sideways} &\begin{sideways} expansion \end{sideways}& \begin{sideways}Frozen Stages\end{sideways} &\begin{sideways} Out indices \end{sideways} & \begin{sideways} Stride \end{sideways} & \begin{sideways} Dilations \end{sideways} &  kernel
  size =1, label smooth= 0.1, optimizer = SGD, Initial learning rate =0.1,
  momentum =0.9, weight decay =5e-4, batch\_size =128, loss = CE  \\ 
\cmidrule{2-13}
 & 4 [1,2,4,1] & 4 & 3 & 64 & 64 & 16 & 26 & 4 & -1 & 3 & (1,2,2,2) & (1,1,1,1) &  \\ 
\hline
\multirow{2}{*}{CNN} &\begin{sideways} \# stages \end{sideways} & \begin{sideways} In channels \end{sideways}& \begin{sideways} Base Channels  \end{sideways}&\begin{sideways} depth \end{sideways}& \begin{sideways} Expansion \end{sideways}& \begin{sideways} Loss \end{sideways} & \begin{sideways} Kernel size \end{sideways} & \begin{sideways} Momentum  \end{sideways}& \begin{sideways} optimizer \end{sideways} & \begin{sideways} Learning rate \end{sideways} & \begin{sideways} Stride \end{sideways}& \begin{sideways} Weight decay  \end{sideways} & label smooth=0.1,
  batch\_size=128  \\ 
\cmidrule{2-13}
 & 4 [1,2,4,1] & 3 & 64 & 26 & 4 & CE & 1 & 0.9 & SGD & 0.1 & (1,2,2,2) & 5E-04 &  \\ 
\hline
\begin{sideways} Student Baseline \end{sideways} & \begin{sideways} optimizer \end{sideways} &\begin{sideways} Patch size \end{sideways} & \begin{sideways} Depth \end{sideways} & \begin{sideways} Mlp\_ratio \end{sideways} & \begin{sideways} Loss function \end{sideways} & \begin{sideways} Emb-dim \end{sideways}& \begin{sideways} Drop path rate \end{sideways} & \begin{sideways} \# heads  \end{sideways} & \begin{sideways} Learning rate \end{sideways} &\begin{sideways} Batch\_size \end{sideways} & \begin{sideways} Eps \end{sideways} & \begin{sideways}  Weight decay  \end{sideways} & Distillation
  type= hard, distillation-alpha=0.5, mixup=0, cut\_mix=0, mixup-prob=0, repeated\_aug=False, color-jitter=0, random
  erase =0 \\ 
\cmidrule{2-13}
\begin{sideways}
    KD
\end{sideways} & AdamW & 4 & 12 & 4 & CE & 192 & 0.1 & 3 & 128 & 0.05 & 1E-06 & 5E-04 &  \\ 
\hline
\begin{sideways} KD Student Superior \end{sideways}  & AdamW & 4 & 12 & 4 & CE & 192 & 0.1 & 3 & 128 & 0.05 & 1E-06 & 5E-04 & Distillation
  type= hard, distillation-alpha=0.5, mixup=0.8, cut\_mix=1.0, mixup-prob=1.0,
  repeated\_aug=True, colour-jitter=0.3, and random erase =0.25 \\
\hline 
\end{tabular}
\end{table*}
\end{tiny} 
It is worth noting that when it comes to model distillation, ``hard distillation'' and ``soft distillation'' refer to different techniques used to transfer knowledge to a smaller or student model from a larger model. Depending on your application, you may choose between hard and soft Distillation. Complex Distillation is well-suited for DeiT models \cite{8}. So, our distillation procedure is based on discrete hard labels of teacher models. The algorithm for the proposed framework \textbf{LIB-KD }is given in Algorithm \ref{algo1}.
\begin{algorithm}
\caption{LIB-KD Algorithm} \label{algo1}
\textbf{Input:} Ensemble of complementary teacher models, student model \\
\textbf{Output:} Optimized Lightweight Student Model \\
\textbf{Objective:} Distilling inductive bias via Knowledge Distillation (KD) \\ 
\vspace{1em}
\textbf{Step 1:} \\
\hspace*{1.5em} \textbf{Configure and Train Teacher Model 1} \\
\hspace*{3em} CNN Teacher $= \text{ResNet26}()$ \\
\hspace*{3em} $\text{CNN\_Loss} = \mathcal{L}_{\text{CE}}(\sigma(z_{t1}), Y)$
\vspace{0.5em}
\textbf{Step 2:} \\
\hspace*{1.5em} \textbf{Configure and Train Teacher Model 2} \\
\hspace*{3em} INN Teacher $= \text{RedNet26}()$ \\
\hspace*{3em} $\text{INN\_Loss} = \mathcal{L}_{\text{CE}}(\sigma(z_{t2}), Y)$
\vspace{0.5em}
\textbf{Step 3:} \\
\hspace*{1.5em} \textbf{Create an Ensemble of Teacher Models (Soft Averaging)} \\
\hspace*{3em} Ensemble Output $= \frac{1}{2}[\sigma(z_{t1}) + \sigma(z_{t2})]$ \\
\hspace*{3em} Prediction $= \arg\max\left(\frac{1}{2}[\sigma(z_{t1}) + \sigma(z_{t2})]\right)$
\vspace{0.5em}
\textbf{Step 4:} \\
\hspace*{1.5em} \textbf{Train Student Model from Scratch (Baseline)} \\
\hspace*{3em} Student Model $= \text{DeiT-Tiny}()$ \\
\hspace*{3em} $\text{Student\_Loss} = \mathcal{L}_{\text{CE}}(\sigma(z_s), Y)$
\vspace{0.5em}
\textbf{Step 5:} \\
\hspace*{1.5em} \textbf{Distill Inductive Bias using Ensemble Hard Labels} \\
\hspace*{3em} Total Loss Objective: \\
\hspace*{4em} $\min \left\{ \mathcal{L}_{\text{CE}}(\sigma(z_s), Y) + \mathcal{L}_{\text{CE}}\left( \frac{z_s}{\tau_1}, \frac{z_t}{\tau_1} \right) \right\}$ \\
\hspace*{3em} Base Student Loss: $\mathcal{L}_{\text{CE}}(\sigma(z_s), Y)$ \\
\hspace*{3em} Distillation Loss: $\mathcal{L}_{\text{CE}}\left( \frac{z_s}{\tau_1}, \frac{z_t}{\tau_1} \right)$
\vspace{0.3em}
\hspace*{1.5em} Optionally, apply weighted loss: \\
\hspace*{3em} $\min \left\{\alpha \cdot \mathcal{L}_{\text{CE}}(\sigma(z_s), Y) + (1 - \alpha) \cdot \mathcal{L}_{\text{CE}}\left( \frac{z_s}{\tau_1}, \frac{z_t}{\tau_1} \right) \right\}$
\end{algorithm}
In the preceding section, we present the findings of a study that looks at how an ensemble of complementary teacher models can be used to distil inductive bias guided through Single-Channel Distillation Tokens (dist.). Table 4 provides a detailed overview of the performance metrics we obtained during the experimentation. Our study reports top-1 and top-5 test accuracies, both for the baseline hyperparameters and for a superior distillation technique. 

Our distillation process differs from conventional approaches, using a single-channel distillation token rather than separate convolutional (convo) and invertible (invo) tokens. Also, streamlining the model architecture using single-channel distillation tokens and related techniques significantly reduces parameter count. As a result, we can achieve computational efficiency and demonstrate the efficacy of our distillation strategy under resource constraints. Furthermore, we compare the results obtained with and without data augmentation techniques to assess their impact on Distillation. As a result of this strategic choice, the distillation procedure is less likely to experience computational overload. Tables \ref{tab:2} and \ref{tab:3} report the results obtained by training three different initialisations of ResNet and RedNet models used for ensemble distillation of complementary inductive to lightweight student models. The loss and accuracy curves of three different initialisations of RedNet 26 models are given in Figures \ref{fig:5}a, \ref{fig:5}b, \ref{fig:5}c. 
\begin{figure}     \centering   \includegraphics[width=\linewidth]{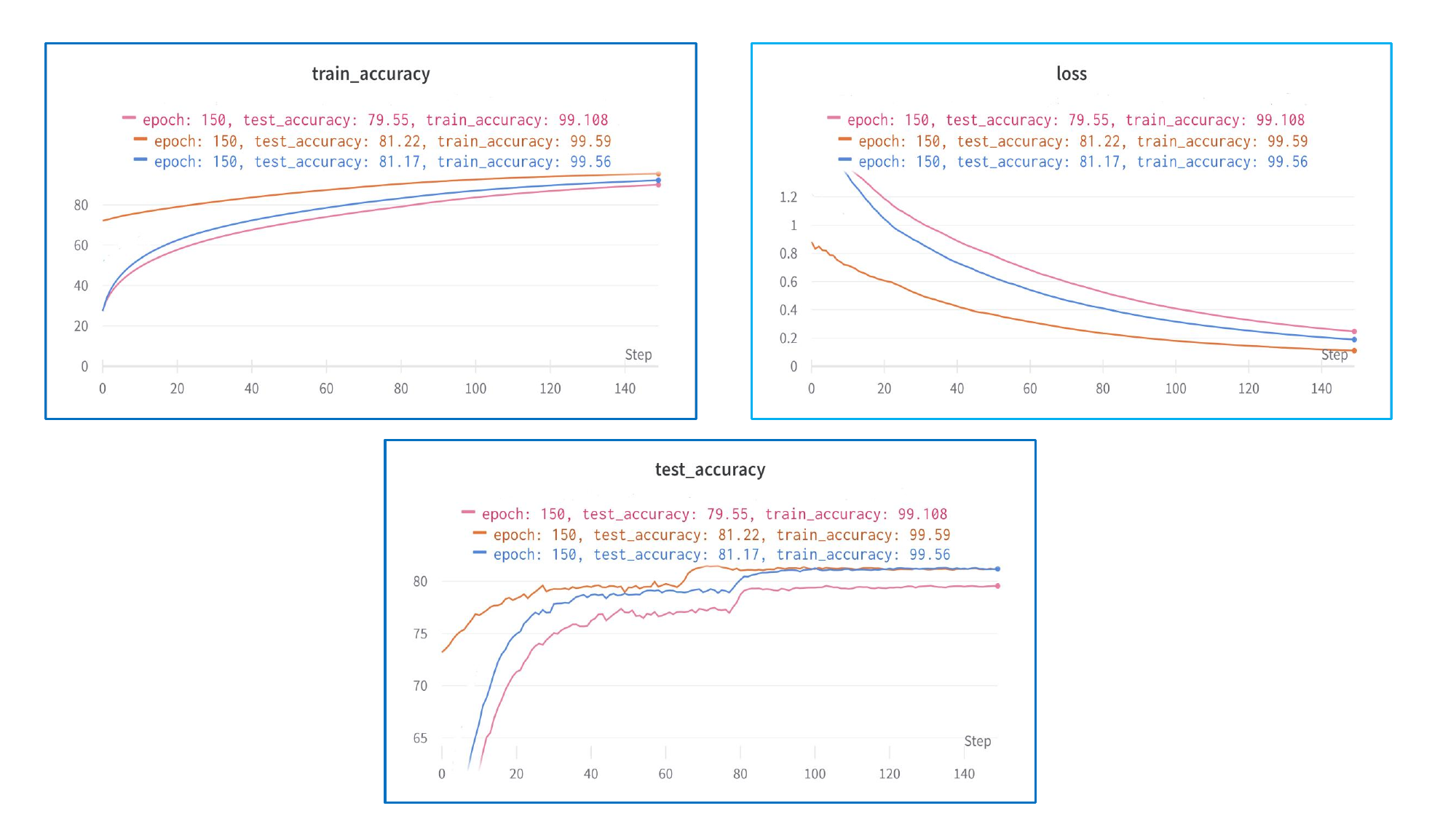} \caption{a) Training accuracy  Curves. b) Training  Loss curves. c)Test accuracy Curves.} \label{fig:5} \end{figure}
\noindent 
A comparative analysis of the initialisations of RedNet is presented in the following Table \ref{tab:2}. Several parameters are analysed within each variant, and the Top-1 and Top-5 test accuracy for each variant.
\begin{tiny}
\begin{table*}
    \centering
        \caption{RedNet initialisations} \label{tab:2}
    \begin{tabular}{|m{1.1cm}|m{0.7cm}|m{1.5cm}|m{2.5cm}|m{1.6cm}|m{1.9cm}|m{2.5cm}|m{1.9cm}|}
    \hline 
    Model&	Depth&	Initializations&	Ensemble Technique	& \# Parameters &	Parameter Size	& Top 1 Test Accuracy &	Train Accuracy   \\
    \hline 
   RedNet	& 26 & 	1	   & - &	 7M  & 	27.41MB&	\textbf{81.17\%} &	99.59\%  \\ 
   \hline 
   RedNet	&26 &2	     & - &	 	7M&	27.41MB & \textbf{79.55\%} & 	99.56\% \\
   \hline 
   RedNet &	26&	3&	- &	 7M& 27.41MB&	\textbf{81.22\%} &	99.11\% \\
   \hline 
   Ensemble RedNet&	3(26)  &	- &Majority Voting&	 	21M&	82.23MB&	\textbf{84.14\%}	&99.89\% \\
   \hline 
   Ensemble RedNet	& 3(26) &	- &Soft averaging&	 	21M &	82.23MB &	\textbf{84.14\%}	& 99.77\%\\ 
   \hline 
    \end{tabular}
\end{table*}
\end{tiny}
\noindent 
Also, we provide a comprehensive comparison of various ResNet variants in the following Table \ref{tab:3} 
\begin{table*}
    \centering
        \caption{ResNet initialisations}     \label{tab:3}
    \begin{tabular}{|m{1.1cm}|m{0.7cm}|m{1.5cm}|m{2.5cm}|m{1.6cm}|m{1.9cm}|m{2.5cm}|m{1.9cm}|}
 \hline 
 Model&	Depth&	Initializations&	Ensemble Technique	& \# Parameters &	Parameter Size	& Top 1 Test Accuracy &	Train Accuracy         \\
 \hline 
    ResNet &	26 &	1&	   - &	 9M& 33.34MB &	\textbf{87.01\%}	&99.14\% \\ 
    \hline 
    ResNet	&26&	 2&	    - &	 9M&	33.34MB&	\textbf{87.89\%}	&99.19\% \\ \hline 
ResNet	& 26 &	 3&		- & 9M &33.34MB&	\textbf{89.66\%}	&99.15\% \\ \hline 
Ensemble ResNet& 3(26)&	- &	Majority Voting&	27M&	100.3MB&	\textbf{92.28\%}	&99.51\% \\ \hline 
Ensemble ResNet & 3(26) &	- &	Soft Averaging &   27M&	100.3MB&	\textbf{92.36\%}&	99.97\%
\\ \hline
    \end{tabular}
\end{table*}
Figure \ref{fig:8} provides graphical insight into the performance characteristics of various ResNet variants. In the scope of our study, we aim to illustrate how different ResNet configurations affect performance metrics using these visualisations.
\begin{figure}
    \centering
    \includegraphics[width=1\linewidth]{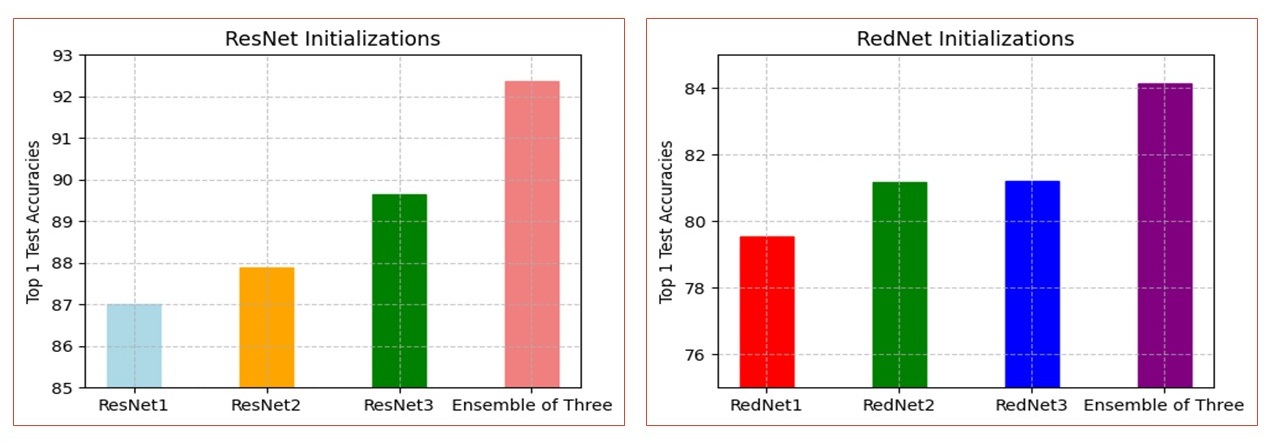}
    \caption{Accuracy comparisons of ResNet and RedNet Initialisations}
    \label{fig:8}
\end{figure}
As demonstrated by Figure \ref{fig:three_images}, both teacher ResNet and RedNet models exhibit complementary inductive biases. It is evident from Figure \ref{fig:three_images} that models are capable of capturing distinct patterns and capturing them in a variety of ways. The unique strengths of each model can explain the robustness and flexibility of the distilled knowledge acquired during the study in terms of inductive bias.
\begin{figure}
    \centering
    \includegraphics[width=\linewidth]{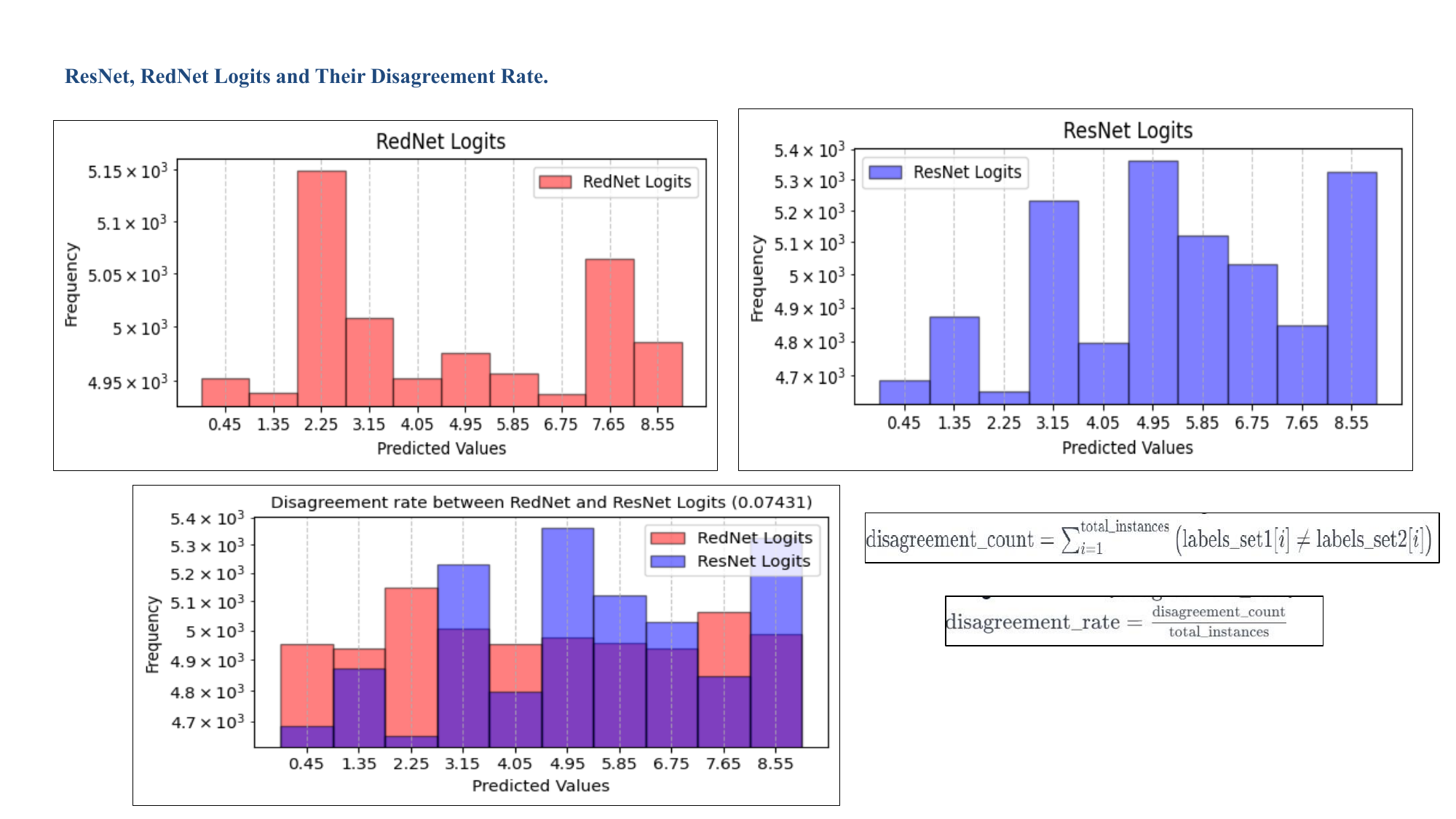}
        \caption{Showing a disagreement between ResNet and RedNet}
    \label{fig:three_images}
\end{figure}
Notably, the ResNet and RedNet ensembles exhibit a calculated disagreement rate of 0.07431. Based on the disagreement rate, this prediction discrepancy shows that these models are learning diverse and non-redundant patterns. As a result of this diversity in known patterns, each model demonstrates a unique inductive bias, further reinforcing the value of the ensemble approach for capturing a broader range of knowledge and insights. This section presents the knowledge. 

Distillation results were achieved using an ensemble of complementary teacher models, including ResNet and RedNet. This ensemble approach uses the unique inductive biases of these models to facilitate comprehensive knowledge transfer to the student model.
\begin{tiny}
\begin{table*}
    \centering
\caption{Test Accuracy comparison using an ensemble of multiple teachers, with 1000 number of Epochs and Dataset as CIFAR-100, using True Label} \label{tab:-4}
    \begin{tabular}{|m{2.9cm}|m{3.28cm}|m{1cm}|m{0.7cm}|m{0.7cm}|m{0.3cm}|m{1cm}|m{1cm} |m{1.8cm}|m{1.8cm}|}
    \hline 
Method &Teacher Type &Dist. Type&Mixup&CutMix&	$\alpha $ &\#Params&Param. Size &	Top 1 Test Accuracy&Top 5-test Accuracy
 \\ \hline 
 Student Baseline &	None&	 	 	\ding{55} &	\ding{55}&	\ding{55}&	\ding{55} &	5.4 &	\ding{55}&		67.19\% &	88.01\% \\ \hline 
 \multicolumn{10}{|c|}{KD From Single Teachers} \\
 \hline 
\multirow{2}{*}{}  
KD Baseline Single & Single ResNet &    Hard &	\ding{55}	& \ding{55} &	\ding{55}	& 9&	34 &	 68.42\% &	88.12\%
\\ \cmidrule{2-10} 
 & Single Rednet &	   Hard &	 \ding{55}	&\ding{55} &	0.5 &	7&	28&  69.43\% &	88.28\% \\
\hline   
\multicolumn{10}{|c|}{KD from an ensemble of two Teachers} \\ 
 \hline 
 KD Baseline from Ensemble of 2 Teachers&	Two ResNets&  	Hard & \ding{55}	& \ding{55} &	0.5&	18&	68&	 68.82\%&	91.77\% 
 \\ \hline 
 \multicolumn{10}{|c|}{KD from an ensemble of 3 Teachers}
\\ \hline 
\multirow{4}{*}{ } & Ensemble (3RedNets) &	  	   Hard &	\ding{55} &	\ding{55} &	0.5&	21&	82&	 69.93\% &	89.98\% \\
\cmidrule{2-10} 
& Ensemble (3ResNets) &	  	   Hard	& \ding{55} &	\ding{55}&	0.5 &	27&	100&	 69.66\% &	89.26\% \\ \cmidrule{2-10} 
{KD Ensemble} {Base Line}&Ensemble (2RedNets + 1ResNet) &	  	   Hard & \ding{55} &	\ding{55}&	0.5 &		23&	156&	 69.99\% &	90.02\% \\ \cmidrule{2-10}
 & Ensemble (2ResNets + 1RedNet) &	  	   Hard & \ding{55} &	\ding{55}&	0.5 &		25&	 228&  69.95\% &	 91.11\%
\\ \hline	
\multicolumn{10}{|c|}{KD from an ensemble of Four Teachers} \\ \hline 
KD Ensemble Baseline from 4 teachers & 	
Ensemble of 4 ResNets &	 	 	   Hard & \ding{55} &	\ding{55}&	0.5 &	36&	136&	69.27\% &	90.89\% \\ \hline
\multicolumn{10}{|c|}{Knowledge Distillation is Superior with aggressive augmentation enabled}
\\ \hline 
Student Superior &	None &	  None & \ding{51} &	\ding{51}& \ding{55}  & 5.4M &	  \ding{55}&	   73.75\% &	94.50\%
\\ \hline 
\multirow{2}{*}{KD Baseline}
& Single ResNet &   Hard & \ding{51}	&\ding{51}	& 	0.5&	9M&	34MB&	76.24\% &	 96.62\%
\\ \cmidrule{2-10}
{Single Superior}& Single Rednet	&   Hard & \ding{51}	&\ding{51}	& 	0.5&7M	&28MB &	 76.55\% &	94.25\%
\\ \hline
\multirow{4}{*}{} & Ensemble (3RedNets) &   Hard & \ding{51}	& \ding{51}	& 	0.5& 21M &	82MB&  77.13\% &97.38\%
\\ \cmidrule{2-10}
{KD} & Ensemble (3ResNets)  & Hard & \ding{51}	&\ding{51}	& 	0.5& 27M	& 100MB &  76.92\%	& 97.17\%
\\ \cmidrule{2-10}
{Ensemble Superior}&Ensemble (2RedNets + 1ResNet) &  Hard & \ding{51}	&\ding{51}	& 	0.5& 23M &	156MB	&   78.64\% &	98.01\%
\\ \cmidrule{2-10}
&Ensemble (2ResNets + 1RedNet) &  Hard & \ding{51}	&\ding{51}	& 	0.5& 25M	& 228MB	&  77.63\%	& 97.15\% \\ \hline 
 \end{tabular}
\end{table*}
\end{tiny}
\noindent
According to the results presented in Table \ref{tab:-4}, knowledge distillation and augmentation strategies can substantially enhance the performance of a student neural network on the CIFAR-100 dataset. The experiments demonstrate that inductive bias is critical in guiding effective learning. A lightweight teacher model with few parameters can significantly enhance students' accuracy. As a result of the aggregated knowledge from multiple teacher models, ensemble knowledge distillation further amplifies the improvements. The combination of aggressive augmentation techniques with knowledge distillation yields remarkable results, highlighting their potential for achieving state-of-the-art results. The above highlights the importance of a teacher's ability to transfer valuable knowledge and their inductive bias, especially when attempting to generalise a model in practice.

From Table \ref{tab:-4}, it is also worth noting that an ensemble of three performs far better than an ensemble of two or four. Ensembles achieve a bias-variance trade-off by reducing both bias and variance. Only two models might reduce variance to less than three, potentially leading to overfitting. Adding a fourth model might make the ensemble too complex, increasing overfitting and variance. Also, a certain point in ensemble learning is reached where the returns diminish. The marginal performance improvement becomes less significant after a certain number of models and may not justify the additional complexity and resource usage. 

Consequently, these findings are of paramount importance for model compression and transfer learning since they demonstrate how a modestly sized teacher model, with a well-structured structure, can impart valuable insights, guiding students towards both efficiency and generalisation, paving the way for real-world applications utilising deep neural networks. As a result, more responsive and resource-efficient AI systems can be created, allowing a more comprehensive range of applications to be developed where real-time constraints are critical.   For various practical scenarios, the ability to obtain competitive accuracy with smaller, more efficient models is one of the most crucial developments. The computational resources and latency constraints are significant considerations in real-time applications, such as object recognition on edge devices, autonomous vehicles, or mobile apps using the Internet of Things.
\section{State-of-the-Art (SOTA) Comparison}
Based on the model comparison presented in Fig. \ref{fig:a1}, DDeIT-Tiny is highly efficient and accurate, achieving an impressive accuracy of 79.00\%. A remarkable feature is that it approaches the accuracy of its teacher model, RedNet, with 79.55\% accuracy, showing the effectiveness of knowledge distillation. Having this kind of efficiency is crucial for resource-constrained or real-time applications. It offers a compelling solution for scenarios requiring accuracy and efficiency, balancing model complexity and performance. Compared to other DeiT models, the DDeIT-Tiny distinguishes itself not only by its impressive accuracy and efficiency but also by its lightweight nature. Unlike its counterparts, such as M-ViT \cite{54} and T-ViT, DeIT-Tiny has a relatively small number of parameters, whereas M-ViT and T-ViT \cite{55} have much larger model sizes. It is exceptionally lightweight as a result of this substantial parameter reduction. Its high performance, efficiency, and reduced complexity make it an ideal choice for applications with limited computational resources, further enhancing its position as a standout DeIT-Tiny model.
\begin{figure}
    \centering
    \includegraphics[width=0.8\linewidth]{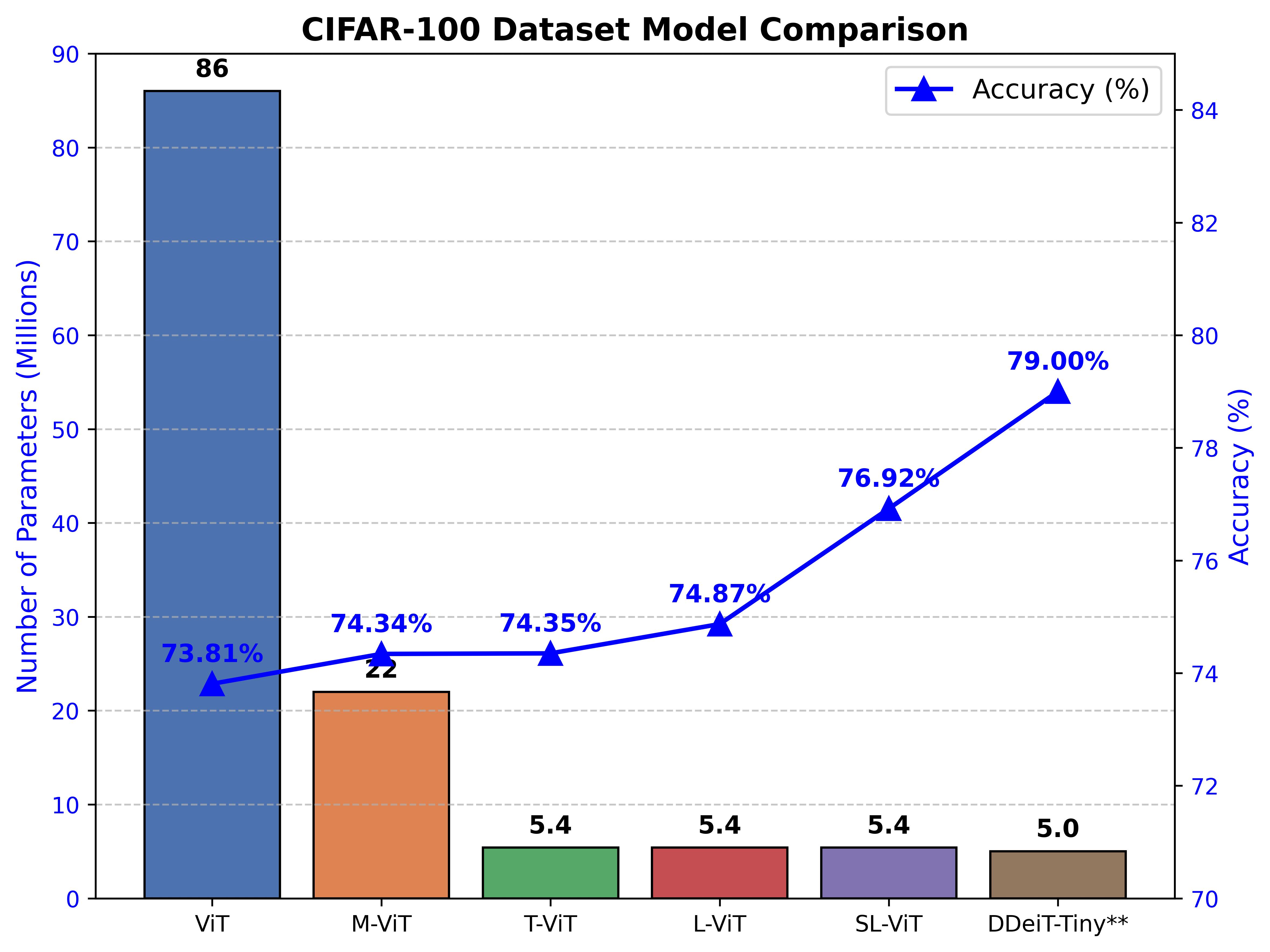}
    \caption{Comparison of Proposed DeiT-Tiny (\textbf{LIB-KD}) with SoTA Models.}
    \label{fig:a1}
\end{figure}
\begin{figure}
    \centering
    \includegraphics[width=\linewidth]{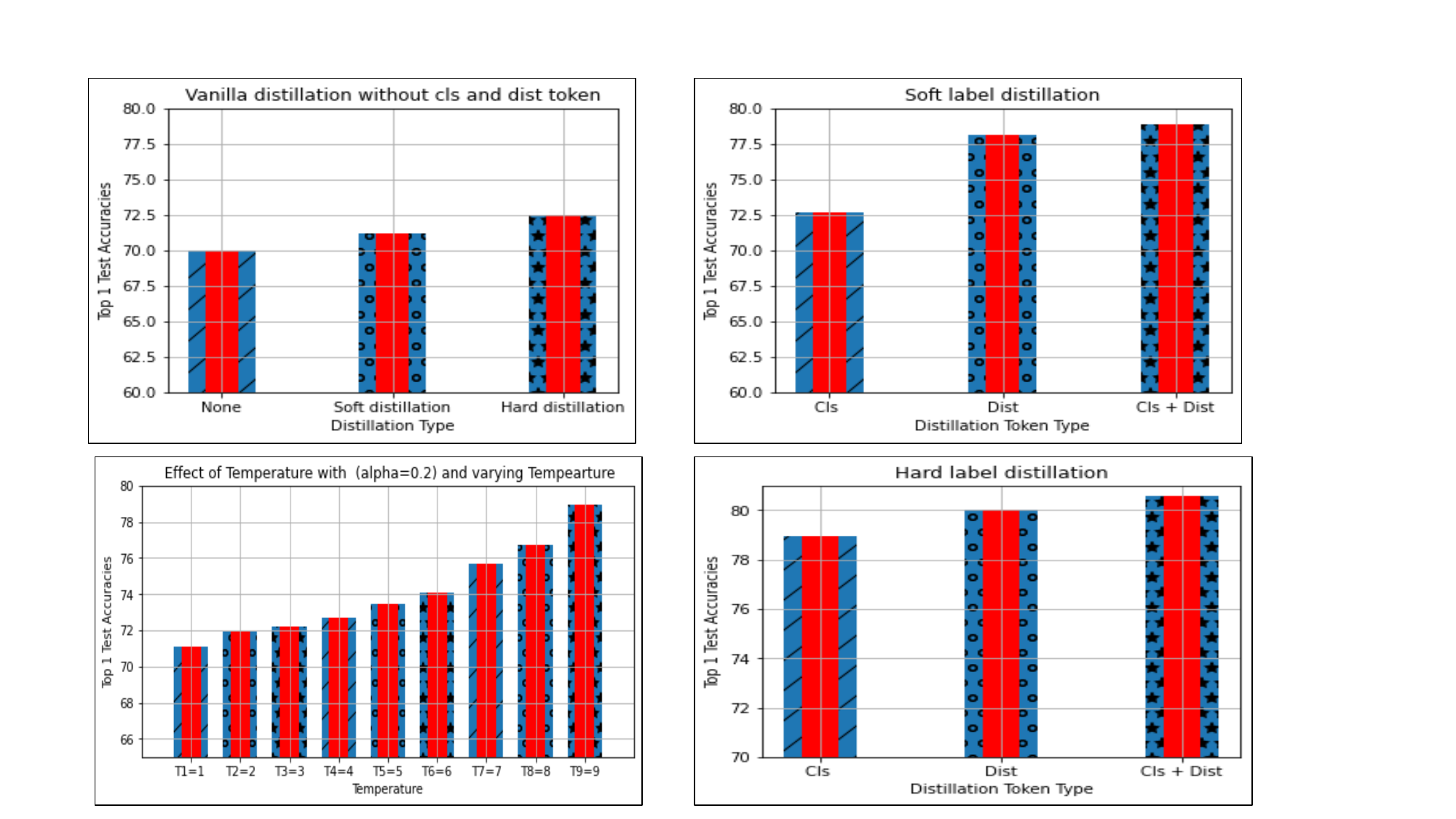}
    \caption{(a) Comparison of hard vs. soft distillation. (b) Soft with Cls+Dist tokens shows combined benefits. (c) Hard with Cls+Dist beats soft in DeiT. (d) Higher temp makes soft match complex label distill.}
  \label{fig:three_images(a)}
\end{figure}
\section{Ablation Study}
It is our primary objective in this ablation study, presented in Figure \ref{fig:distillation_results}, to evaluate the efficacy of using pre-trained heavyweight teachers on the ImageNet dataset to distill knowledge into various variants of DeiT (Data-efficient Image Transformer) models, each with a different number of parameters. Moreover, this analysis is extended to a different dataset featuring high-resolution images focused explicitly on flower classification \cite{54}, in contrast to the CIFAR-100 dataset. Our goal is to gain comprehensive insights into the effects of teacher choice, model complexity, and dataset variation on the performance and efficiency of student models by systematically evaluating the knowledge transfer process from these pre-trained teachers \cite{54} to diverse DeiT variants. This study provides valuable insight into the optimal knowledge distillation strategy for real-world applications and domains with varied data characteristics.
\begin{figure}[htbp]
    \centering
    \includegraphics[width=\linewidth]{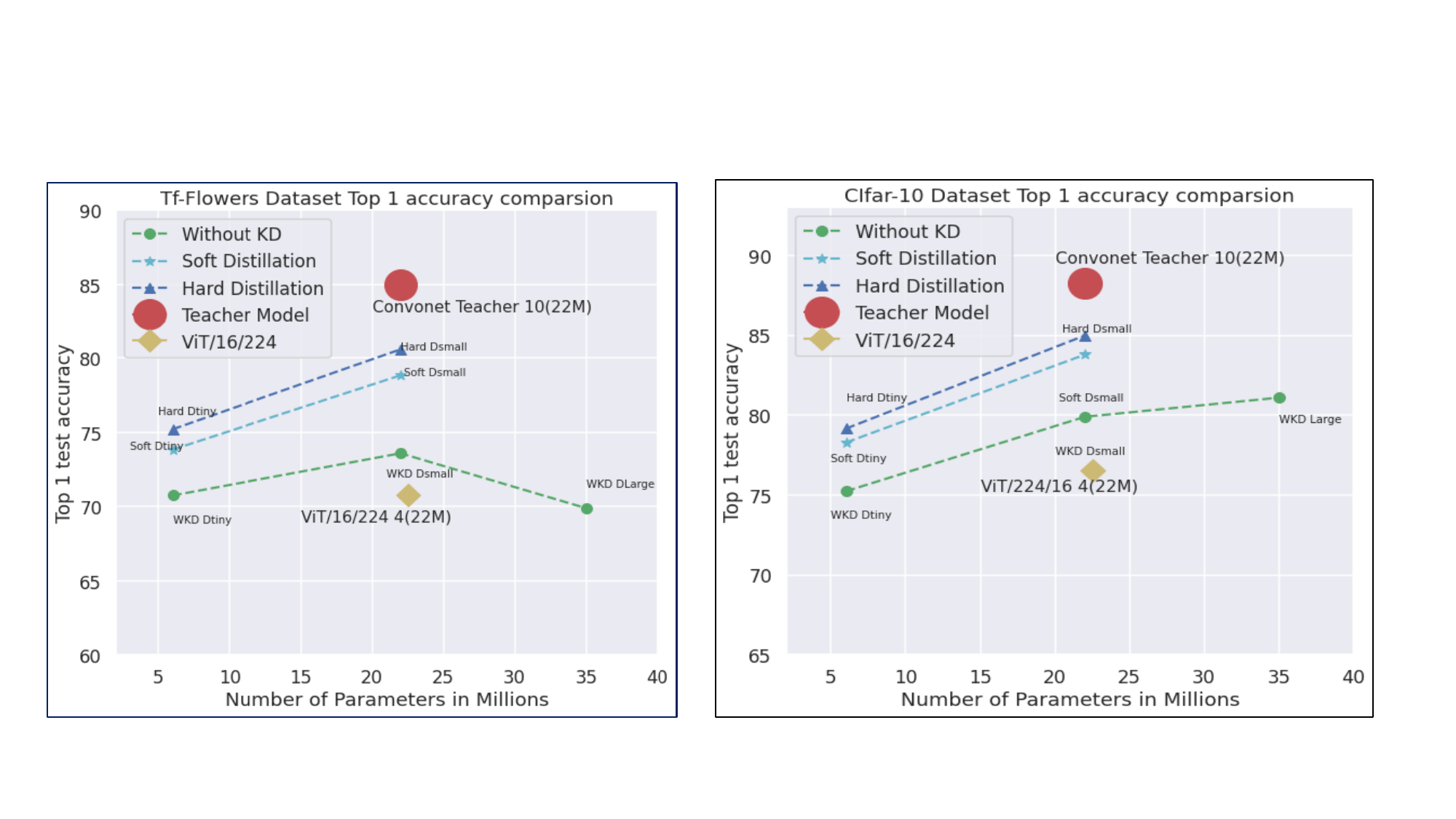}
    \caption{Distillation on (a) TF-Flowers \cite{54} and (b) CIFAR-10 \cite{53}: Teacher ResNet-220M vs. student DeiT-Tiny-88M ($\text{~} 4\times $ lighter)}
    \label{fig:distillation_results}
\end{figure}
Compared to the TF Flowers dataset (Figure \ref{fig:distillation_results}a), the CIFAR-10 dataset (Figure \ref{fig:distillation_results}b) shows substantially improved performance, primarily due to the larger volume and diversity of the data. With abundant data, CIFAR-10 enables better generalization by capturing complex patterns and nuances, thus improving accuracy. Additionally, aggressive augmentation techniques applied to the TF Flowers dataset further enhance model performance.
\section{Conclusion}
The paper concludes by discussing the transformative potential of ViTs in computer vision, emphasising their ability to bridge the gap between visual and textual domains. The report identifies a fundamental challenge in ViTs: their lack of inherent inductive biases, which leads to their heavy reliance on large datasets. As the paper systematically analyses ViTs and CNNs, we emphasise the uniformity of ViT representations across layers and the crucial role of local receptive fields. In addition, it underscores the need for innovative solutions to allow ViTs to collect local feature information while retaining their global receptive field capabilities. The study employs an ensemble-based approach that leverages knowledge from complementary multi-teacher models (INNs and CNNs) to address these challenges. The method optimises ViT performance and efficiency, breaking new ground when applying transformers to small datasets with a diverse ensemble of lightweight teachers.
\bibliographystyle{spmpsci}      
\bibliography{ref}   

%
%

\end{document}